\documentclass[letterpaper,12pt,onecolumn]{ieeeconf}

\IEEEoverridecommandlockouts                              % This command is only needed if 
\usepackage[T1]{fontenc}
\usepackage{lmodern}
\usepackage{comment}
\usepackage{microtype}

\usepackage{amsmath,amssymb,mathtools} %amsthm
\usepackage{algorithmic}
\usepackage{algorithm}
\usepackage{array}
\usepackage[caption=false,font=normalsize,labelfont=sf,textfont=sf]{subfig}
\usepackage{textcomp}
\usepackage{stfloats}
\usepackage{url}
\usepackage{verbatim}
\usepackage{graphicx}
\usepackage{cite}
\usepackage{enumerate}
\usepackage{balance}
\usepackage{siunitx}

\usepackage{tikz}

\newcommand{\benchmarkmark}{%
  \tikz[baseline=(char.base)]{
    \node[shape=circle,draw,inner sep=0.15ex] (char) {\scriptsize$\bullet$};
  }%
}

\newcommand{\proposedmark}{%
  \tikz[baseline=-0.45ex] \fill (0,0) circle (0.75ex);%
}

\usepackage{xcolor}
\usepackage[hidelinks]{hyperref} % hidelinks keeps the text black for IEEE

\usepackage{xurl} 
\usepackage{amsthm}

\newtheorem{proposition}{Proposition}

\newtheorem*{remark}{Remark}

\renewcommand{\baselinestretch}{1.02}

\usepackage[acronym]{glossaries}

\makeglossaries

\newacronym{res}{RES}{Renewable Energy Sources}
\newacronym{bess}{BESS}{Battery Energy Storage Systems}
\newacronym{pv}{PV}{Photovoltaic}
\newacronym{dr}{DR}{Demand Response}
\newacronym{lfm}{LFM}{Local Flexibility Markets}
\newacronym{ems}{EMS}{Energy Management Systems}
\newacronym{ev}{EV}{Electric Vehicles}
\newacronym{iso}{ISO}{Independent System Operators}
\newacronym{scm}{SCM}{Synthetic Control Method}
\newacronym{gan}{GAN}{Generative Adversarial Network}
\newacronym{gnn}{GNN}{Graph Neural Networks}
\newacronym{dnn}{DNN}{Dense Neural Networks}
\newacronym{moe}{MoE}{Mixture of Experts}
\newacronym{lstm}{LSTM}{Long Short-Term Memory Models}
\newacronym{soc}{SOC}{State of Charge}
\newacronym{std}{STD}{Standard Deviation}
\newacronym{mse}{MSE}{Mean Squared Error}
\newacronym{hvac}{HVAC}{Heating, Ventilation, and Air Conditioning}
\newacronym{mlp}{MLP}{Multilayer Perceptron}

\title{\LARGE \bf
A Generalized Synthetic Control Method \\ for Baseline Estimation in Demand Response Services
}

\author{Jonas Sievers$^{1}$ and Mardavij Roozbehani$^{2}$% <-this % stops a space
\thanks{$^{1}$Jonas Sievers is with Institute for Data Processing and Electronics, Karlsruhe Institute of Technology, 76344 Eggenstein-Leopoldshafen, Germany {\tt\small jonas.sievers@kit.edu}}%
\thanks{$^{2}$Mardavij Roozbehani with the Laboratory for Information and Decision Systems, Massachusetts Institute of Technology, Boston, MA 02139, USA {\tt\small mardavij@mit.edu}}%
}

\begin{document}

\maketitle
\thispagestyle{plain}
\pagestyle{plain}

%%%%%%%%%%%%%%%%%%%%%%%%%%%%%%%%%%%%%%%%%%%%%%%%%%%%%%%%%%%%%%%%%%%%%%%%%%%%%%%%

\begin{abstract}
Baseline estimation is critical to Demand Response (DR) settlement in electricity markets, yet existing machine learning methods remain limited in predictive performance, while methodologies from causal inference and counterfactual prediction are still underutilized in this domain. We introduce a \emph{Generalized Synthetic Control Method} that builds on the classical \gls{scm} from econometrics. While SCM provides a powerful framework for counterfactual estimation, classical SCM remains a static estimator: it fits the treated unit as a combination of contemporaneous donor units and therefore ignores predictable temporal structure in the residual error. We develop a generalized SCM framework that transforms baseline estimation into a dynamic counterfactual prediction problem by augmenting the donor representation with exogenous features, lagged treated load, and selected lagged donor signals. This enriched representation allows the estimator to capture autoregressive dependence, delayed donor-response patterns, and error-correction effects beyond the scope of standard SCM. The framework further accommodates nonlinear predictors when linear weighting is inadequate, with the greatest benefit arising in limited-data settings. Experiments on the Ausgrid smart-meter dataset show consistent improvements over classical SCM and strong benchmark methods, with the dominant performance gains driven by dynamic augmentation.   

\end{abstract}

\section{INTRODUCTION}

Flexibility services such as \gls{dr} and \glspl{lfm} are becoming increasingly important for maintaining grid stability in power systems with growing shares of volatile \gls{res}. While \gls{dr} programs incentivize consumers to adjust electricity consumption in response to grid conditions, \glspl{lfm} provide market-based mechanisms for procuring local flexibility to relieve congestion. 

A central challenge in both \gls{dr} and \gls{lfm} is baseline estimation, i.e., the counterfactual consumption that would have occurred absent a flexibility incentive \cite{muthirayan_2018, satchidanandan_2023}. Because delivered flexibility is measured as the deviation from this baseline, accurate and transparent estimation is essential for fair settlement and efficient market design. Inaccurate baselines can distort remuneration, weaken participation incentives, and create opportunities for strategic manipulation through deliberate load shifting \cite{ellman_customer_2019}.

\begin{figure}[!t]
    \centering
    \includegraphics[width=4 in]{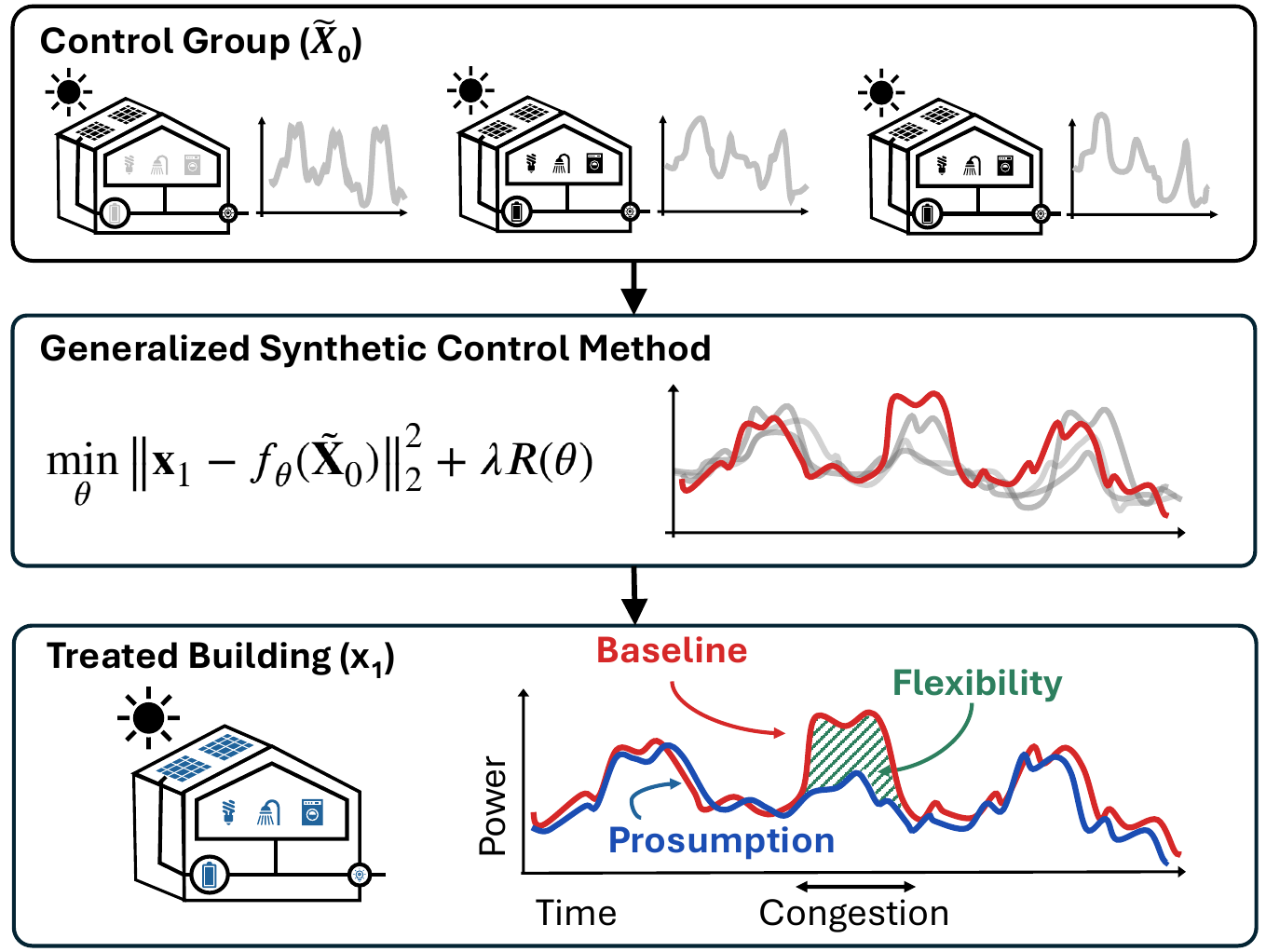}
    \caption{Architectural overview of the Generalized Synthetic Control Method for baseline estimation in residential flexibility services.}
    \label{fig:Overview_SCM}
\end{figure}

\subsection{Related Work} \label{sec_relatedWork}
Recent approaches to customer baseline estimation can be broadly grouped into averaging, regression, clustering-based, and deep learning methods (\autoref{tab_related_work}). Averaging-based methods, including Exponential Moving Average and X-of-Y techniques, are widely used in practice because of their simplicity \cite{pati_methodologies_2020}. However, their reliance on historical averages limits accuracy when consumption patterns shift rapidly.

Regression-based methods improve adaptability by modeling consumption as a function of explanatory variables. Early approaches used linear and polynomial regression \cite{pati_methodologies_2020}, with Ridge or Lasso regularization. More flexible models include Support Vector Machines \cite{campodonico_avendano_assessing_2023}, Random Forests, and XGBoost \cite{pati_methodologies_2020}.

Deep learning has further advanced baseline estimation by capturing nonlinear dependencies in load profiles through architectures such as \gls{dnn}, \gls{lstm}, and \glspl{gan} with attention mechanisms \cite{zhang_cbl_2024, wang_residential_2020, wang_customer_2024}. Like other forecasting-based approaches, these models estimate counterfactual consumption from pre-intervention information, extrapolating historical load and covariates under the assumption of no flexibility activation.

In clustering-based approaches, the baseline is inferred from historical trends together with contemporaneous observations from a peer group selected to resemble the treated building. Methods that use reference-building information during the intervention period are closer in spirit to the Synthetic Control Method. Common techniques include K-Means, DBScan, Self-Organizing Maps, and frequency-based clustering based on the Discrete Fourier Transform \cite{schwarz_building_2020}.

The \gls{scm}, originally introduced by Abadie et al. \cite{abadie_economic_2003, abadie_synthetic_2010}, extends this peer-based logic by replacing discrete group selection with continuous weight optimization. Rather than relying on a pre-clustered subset, \gls{scm} constructs a synthetic counterfactual as a weighted combination of units in a donor pool that are not exposed to the intervention. In the context of baseline estimation, this amounts to combining profiles from similar non-participating buildings with weights chosen to match the treated building’s historical consumption. Owing to its causal interpretability and flexibility, \gls{scm} has been widely applied in economics, public health, social sciences, and engineering \cite{abadie_using_2021}.

\begin{table*}
{\footnotesize
\begin{center}
\caption{Concept Matrix for the Literature on Baseline Estimation in Flexibility Services.}
\label{tab_related_work}
{
\setlength{\tabcolsep}{2pt}  % default ~6pt
\renewcommand{\arraystretch}{1.2} % optional: slightly tighter rows

\begin{tabular}{ll|lccccc}
\hline
Reference & Year & Focus & Average & Regression & Deep L. & Cluster & Optimization \\
\hline
\cite{peng_aggregated_2024} & 2024 & Aggregated BE with a Graph Neural Network & \benchmarkmark & \benchmarkmark & \benchmarkmark\ \proposedmark & \benchmarkmark\ \proposedmark & \\ 
\hline
\cite{wang_residential_2020} & 2020 & Aggregated BE with Stacked Autoencoders & \benchmarkmark & \benchmarkmark & \benchmarkmark\ \proposedmark & \benchmarkmark\ \proposedmark & \\
\hline
\cite{wang_customer_2024} & 2024 & BE with an Attention-based GAN & \benchmarkmark & \benchmarkmark & \benchmarkmark\ \proposedmark & \benchmarkmark\ \proposedmark & \\  
\hline
\cite{qian_residential_2024} & 2024 & BE with a Diffusion Model & \benchmarkmark & \benchmarkmark & \benchmarkmark\ \proposedmark & \benchmarkmark\ \proposedmark & \\ 
\hline
\cite{pati_methodologies_2020} & 2020 & BE with Dense Neural Networks & \benchmarkmark & \benchmarkmark & \proposedmark & & \\
\hline
\cite{campodonico_avendano_assessing_2023} & 2023 & BE Evaluation of various ML Models & \benchmarkmark & \benchmarkmark & & & \\
\hline
\cite{zhou_robust_2022} & 2022 & BE with Mixed Effect Regression & & \benchmarkmark\ \proposedmark & & & \\
\hline
\cite{ge_spatio-temporal_2022} & 2022 & BE with K-means and Lasso Regression & \benchmarkmark & \benchmarkmark\ \proposedmark & & \benchmarkmark\ \proposedmark & \\
\hline
\cite{ying_customers_2023} & 2023 & BE with K-means and Weighted Averaging & \benchmarkmark\ \proposedmark & & & \proposedmark & \\
\hline
\cite{zhang_cbl_2024} & 2024 & BE with K-means and LSTM & & & \proposedmark & \proposedmark & \\
\hline
\cite{schwarz_building_2020} & 2020 & BE with Frequency-based Clustering and Averaging & \benchmarkmark\ \proposedmark & \proposedmark & & \benchmarkmark\ \proposedmark & \\
\hline
This paper & 2025 & BE with Advanced Synthetic Control Method & \benchmarkmark & \benchmarkmark & \benchmarkmark\ \proposedmark & \benchmarkmark & \proposedmark \\
\hline
\end{tabular}
}
\end{center}
}
\begin{minipage}{\textwidth}
Where \proposedmark\ represents a proposed method in the respective category, and \benchmarkmark\ indicates its inclusion as a benchmark. Abbreviations: Baseline Estimation (BE); Generative Adversarial Network (GAN); Machine Learning (ML); Long Short-Term Memory Model (LSTM); Deep Learning (Deep L).
\end{minipage}
\end{table*}

\subsection{Contribution}

We propose a Generalized Synthetic Control framework for baseline estimation (\autoref{fig:Overview_SCM}). To the best of our knowledge, this is the first work to apply counterfactual prediction ideas from the causal-inference literature, and specifically Synthetic Control, to DR baseline estimation.  Our main methodological contribution is to extend classical SCM from a static estimator based on contemporaneous donor loads to a dynamic counterfactual predictor built from an augmented representation including exogenous features, lagged treated load, and selected lagged donor signals. This reformulation enables SCM to incorporate temporal information that is excluded by its standard static formulation. We further generalize the framework to admit nonlinear predictors and show empirically that their benefit is most pronounced in data-constrained settings or when donor pools are small. Finally, using real-world smart-meter data, we demonstrate consistent improvements over classical SCM and strong benchmark methods, with the largest gains attributable to dynamic augmentation while retaining the interpretability that makes SCM attractive for market applications.

\section{METHODOLOGY} \label{sec_methodology}
%This section presents the methodology, including control group selection and nonlinear extensions. Scalars are denoted by ($y$), vectors by ($\mathbf{y}$), and matrices by ($\mathbf{Y}$).

\subsection{Baseline Estimation Using the Synthetic Control Method}

We estimate the DR baseline using a synthetic control formulation, in which the untreated consumption of a participating building is approximated by a weighted combination of non-participating buildings.

Let \(y_{jt} \in \mathbb{R}\) denote the electricity consumption of building \(j \in \{1,\dots,J{+}1\}\) at time \(t \in T_0 \cup T_1\), where \(T_0\) and \(T_1\) denote the pre and post intervention periods, respectively. Building \(j=1\) is the treated unit, i.e., the building that received the flexibility payment, while the remaining \(J\) buildings serve as donor units and receive no flexibility payments. During congestion (\(t \in T_1\)), the delivered flexibility, or treatment effect, is defined as the deviation of observed consumption \(y_{1t}\) from the estimated baseline \(\hat y_{1t}\) \cite{abadie_using_2021}:
\begin{equation}\label{eq_scm_flex}
    \hat{\tau}_{1t} = \hat{y}_{1t} - y_{1t}, \quad t \in T_1.
\end{equation}

The baseline is estimated as a weighted combination of donor consumption:
\begin{equation}\label{eq_scm_be}
    \hat{y}_{1t} = \sum_{j=2}^{J+1} w_j\, y_{jt}, \quad t \in T_1,
\end{equation}
where the weight vector \(\mathbf w = (w_2,\dots,w_{J+1})^\top\) is chosen to best reproduce the treated unit's pre-congestion profile. Let \(\mathbf x_1 \in \mathbb{R}^{|T_0|}\) denote the treated building's consumption over \(T_0\), and let \(\mathbf X_0 \in \mathbb{R}^{|T_0| \times J}\) denote the corresponding matrix of donor consumption. In the classical SCM formulation, the weights are obtained by solving
\begin{equation}\label{eq_scm_w}
    \mathbf w^*
    =
    \arg\min_{\mathbf w}\;
    \|\mathbf x_1 - \mathbf X_0 \mathbf w\|_2^2
    \quad
    \text{s.t.}
    \quad
    w_j \ge 0,\;
    \sum_{j=2}^{J+1} w_j = 1.
\end{equation}

The simplex constraints promote interpretability and regularization: each coefficient \(w_j\) can be interpreted as the contribution of donor \(j\) to the synthetic control. In our setting, however, we relax the non-negativity constraint and allow \(w_j \in \mathbb{R}\), which permits the model to capture both positive and negative correlations among building-level consumption profiles. To control variance and reduce overfitting, we instead impose explicit \(\ell_2\)-regularization:
\begin{equation}\label{eq_scm_general}
    \mathbf w^*
    =
    \arg\min_{\mathbf w}\;
    \|\mathbf x_1 - \mathbf X_0 \mathbf w\|_2^2
    + \lambda \|\mathbf w\|_2^2
    \quad
    \text{s.t.}
    \quad
    \sum_{j=2}^{J+1} w_j = 1.
\end{equation}

\subsection{Augmented Synthetic Control Design}
\begin{figure*}[!t]
    \centering
    \subfloat[]{\includegraphics[width=3.2in]{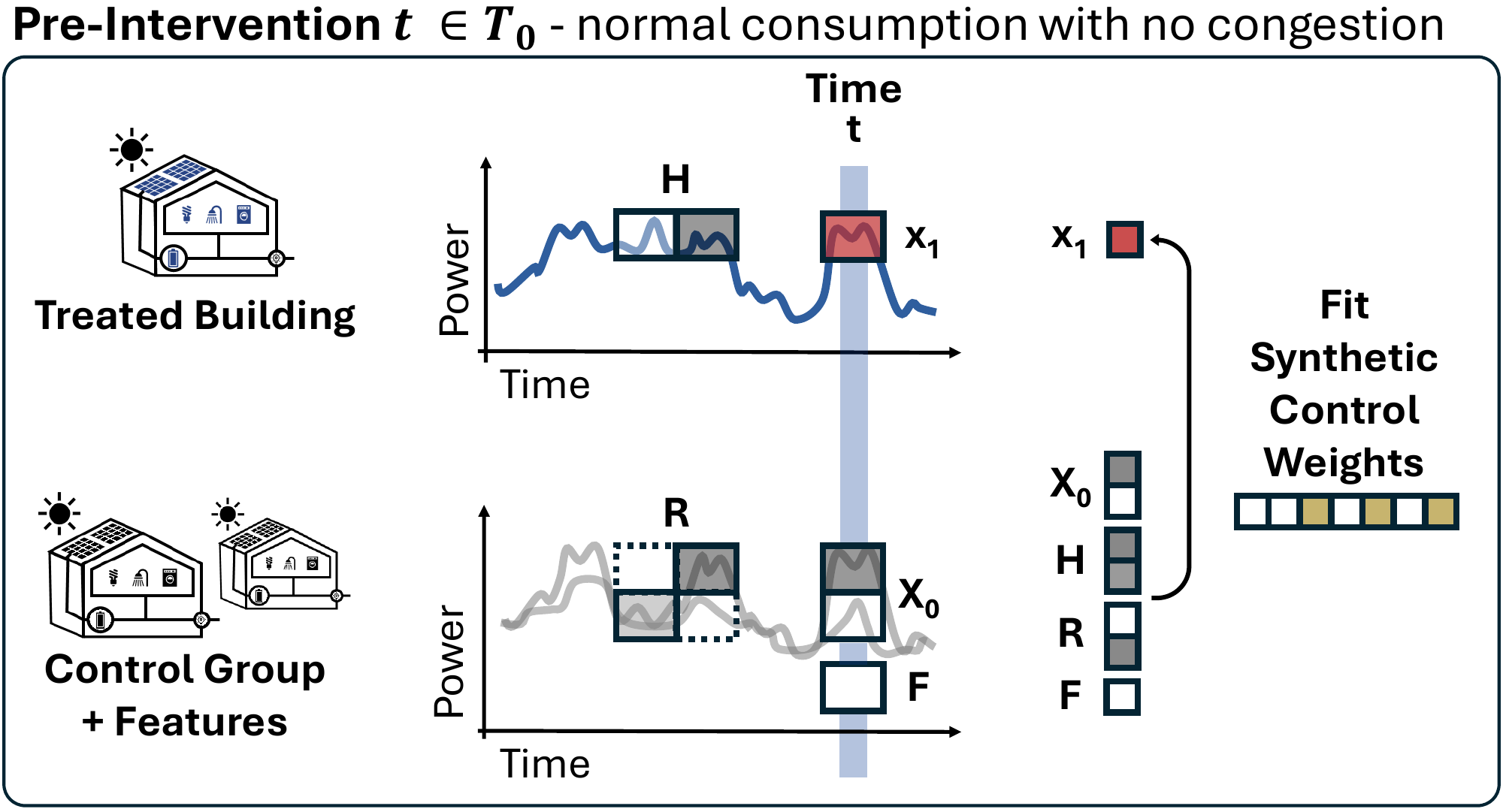}%
    \label{fig2a_pre}}
        \hfil
    \subfloat[]{\includegraphics[width=3.2in]{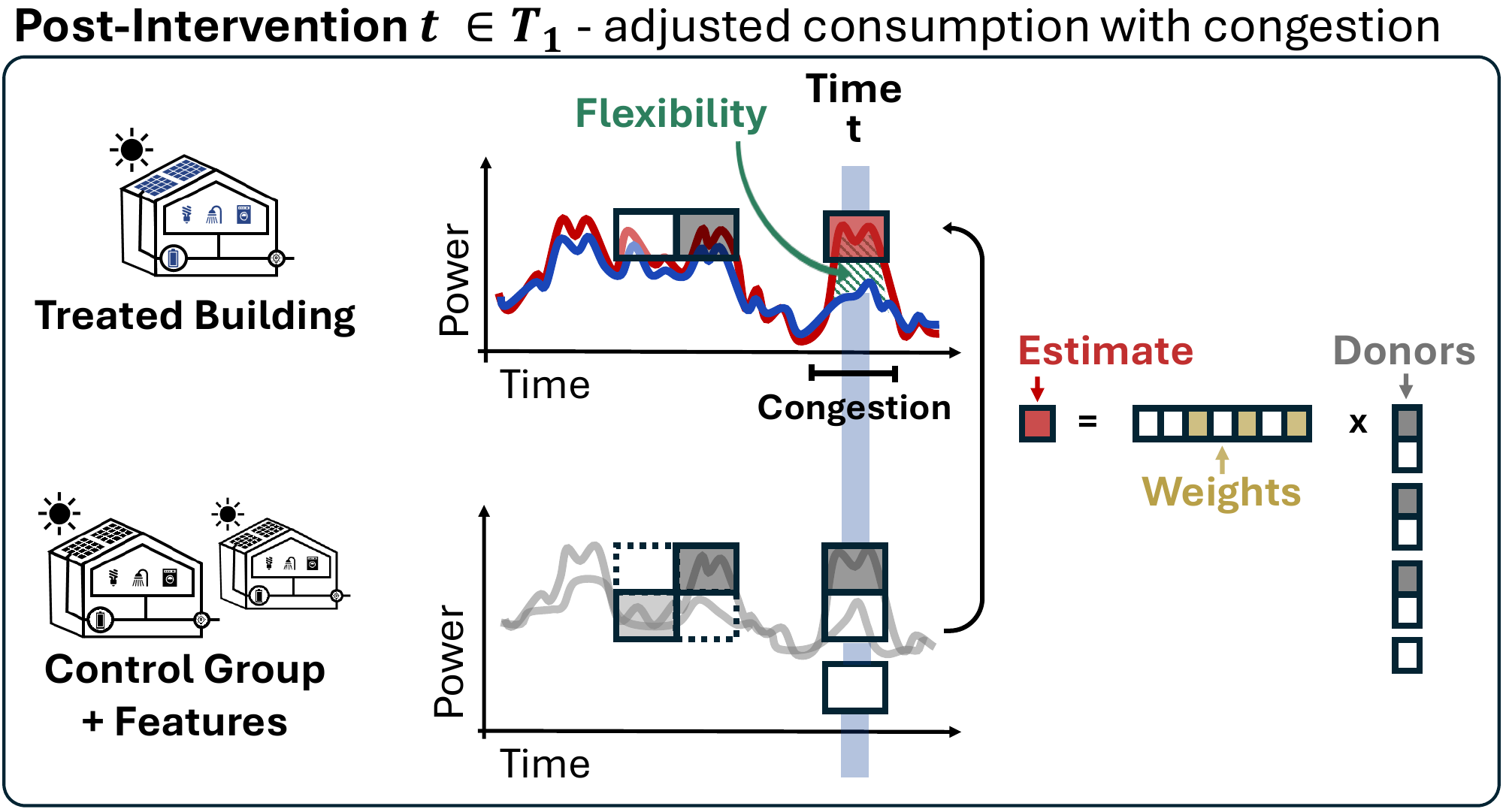}%
    \label{fig2b_post}}
    \caption{Overview of the proposed baseline estimation framework for flexibility provision in demand response programs: (a) estimation of the synthetic control weights from the extended control group during the pre-intervention period, and (b) reconstruction of the baseline during congestion events.}
    \label{fig:scm_fit}
\end{figure*}

The accuracy of counterfactual estimation in \gls{scm} depends on the donor pool’s ability to reproduce both observed predictors and latent factors of the treated unit. A key limitation of standard \gls{scm} is that this representation is static: it relies only on contemporaneous donor outcomes and therefore does not exploit temporal structure in the residual mismatch between the treated unit and its synthetic control. In practice, the treated unit need not lie exactly in the linear span of the donors, and the resulting error may exhibit serial dependence or delayed dependence on common demand drivers. To address this, we augment \(\mathbf{X}_0 \in \mathbb{R}^{|T_0| \times J}\) with exogenous covariates, lagged treated outcomes, and selected lagged donor outcomes.\footnote{See \ref{ap:scm_theory} for theoretical justification and discussion.} Specifically, we introduce three additional feature blocks (\autoref{fig:scm_fit}):
\begin{enumerate}[(i)]
    \item \textbf{Exogenous features} \(\mathbf{F} \in \mathbb{R}^{|T_0| \times M}\). For each \(t \in T_0\), the row \(\mathbf{f}_t^\top\) contains \(M\) observed covariates, such as temperature, humidity, and calendar indicators, that serve as proxies for external demand drivers.

    \item \textbf{Lagged treated outcomes} \(\mathbf{H} \in \mathbb{R}^{|T_0| \times L}\). For each \(t \in T_0\), the row \(\mathbf{h}_t^\top\) contains the \(L\) most recent consumption values of the treated unit,
    \[
    \mathbf{h}_t^\top = (y_{1,t-1}, y_{1,t-2}, \dots, y_{1,t-L}),
    \]
    so that \(\mathbf{H}_{t,\ell} = y_{1,t-\ell}\) for \(\ell = 1,\dots,L\).

    \item \textbf{Selected lagged donor outcomes} \(\mathbf{R} \in \mathbb{R}^{|T_0| \times J}\). For each donor \(j \in \{2,\dots,J+1\}\), we select the lag \(k_j^*\) with the largest absolute Pearson correlation with the treated series over the pre-intervention period:
    \begin{equation}\label{eq_pearsonCorr}
        k_j^* \in \arg\max_{1 \le k \le L}
        \left|
        \operatorname{corr}\big(
        \{y_{1t}\}_{t\in T_0},
        \{y_{j,t-k}\}_{t\in T_0}
        \big)
        \right|.
    \end{equation}
    For each \(t \in T_0\), the row \(\mathbf{r}_t^\top\) is
    \[
    \mathbf{r}_t^\top =
    \big(
    y_{2,t-k_2^*},\,
    y_{3,t-k_3^*},\,
    \dots,\,
    y_{J+1,t-k_{J+1}^*}
    \big).
    \]
    Thus, each donor contributes one lagged series chosen to maximize alignment with the treated unit while keeping the feature dimension manageable.
\end{enumerate}

The augmented feature matrix is then
\begin{equation} \label{eq_ext_matrix}
\tilde{\mathbf{X}}_0
=
\left[
\mathbf{X}_0 \quad
\mathbf{F} \quad
\mathbf{H} \quad
\mathbf{R}
\right]
\in \mathbb{R}^{|T_0| \times D},
\end{equation}
where \(D = J + M + L + J \). The synthetic control coefficients are estimated using \eqref{eq_scm_general} with \(\mathbf{X}_0\) replaced by \(\tilde{\mathbf{X}}_0\).

An important distinction from standard \gls{scm} arises when lagged treated outcomes are included. In classical \gls{scm}, post-intervention predictions depend only on donor information and can therefore be evaluated over any post-intervention horizon. Once \(\mathbf{H}\) is included, however, multi-step counterfactual prediction is valid only if post-intervention lagged treated inputs are generated recursively from previously estimated counterfactual outcomes rather than realized treated outcomes. Otherwise, evaluation should be restricted to the one-step-ahead prediction at \(T_0+1\).

\subsection{Integrating Nonlinear Predictors into the Synthetic Control Framework}

Classical \gls{scm} estimates the counterfactual baseline as a linear combination of donor features. This can be restrictive when the treated unit is not well approximated by a linear predictor, even after augmentation. We therefore replace the linear map by a function \(f_{\theta}:\mathbb{R}^{D}\to\mathbb{R}\) that maps the \(t\)-th row of the augmented feature matrix \(\tilde{\mathbf{X}}_0\), denoted \(\tilde{\mathbf{x}}_{0t}\), to the treated outcome \(y_{1t}\).

The model parameters are estimated on the pre-treatment window \(T_0\) by minimizing the regularized mean squared error
\begin{equation}\label{eq_scm_nn1}
    \theta^* \in \arg\min_{\theta}\;
    \frac{1}{|T_0|}\sum_{t\in T_0}\big(y_{1t}-f_{\theta}(\tilde{\mathbf{x}}_{0t})\big)^2
    + \lambda R(\theta),
\end{equation}
where \(R(\theta)\) denotes a regularization term. During congestion periods \(t\in T_1\), the counterfactual baseline is then obtained by forward evaluation of the trained model:
\begin{equation}\label{eq_scm_nn2}
    \hat{y}_{1t} = f_{\theta^*}(\tilde{\mathbf{x}}_{0t}), \qquad t\in T_1.
\end{equation}

Equations \eqref{eq_scm_nn1}--\eqref{eq_scm_nn2} generalize \gls{scm} by allowing nonlinear dependence on the augmented donor representation. In principle, \(f_\theta\) may be implemented using more expressive sequence models, such as \gls{lstm}s or Transformers. In this paper, however, our goal is to introduce the general framework rather than optimize model architecture, and we therefore use a basic \gls{mlp}. The classical linear formulation is recovered as the special case \(f_{\theta}(\mathbf{z})=\mathbf{z}^\top\theta\), with \(\mathbf{z}=\tilde{\mathbf{x}}_{0t}\).

\section{RESULTS} \label{sec_results} 
The experimental setup is described in \ref{ap:scm_exp_setup}. The evaluation is based on the Ausgrid dataset \cite{ratnam_residential_2017}, extended with simulated battery operation and aligned meteorological features. Results are reported for 50 treated buildings, each evaluated against a donor pool of 299 control buildings. 

\subsection{Linear Synthetic Control for Baseline Estimation} \label{sec_results_linear}
We begin by benchmarking the linear \gls{scm}, which serves as the foundation for later extensions. \autoref{fig2_bench} presents the \gls{mse} for averaging (grey), regression (blue), and clustering (yellow) benchmarks, compared with our linear \gls{scm} (red).

\begin{figure}[!t]
    \centering
    \framebox{\parbox{5 in}{\includegraphics[width=5 in]{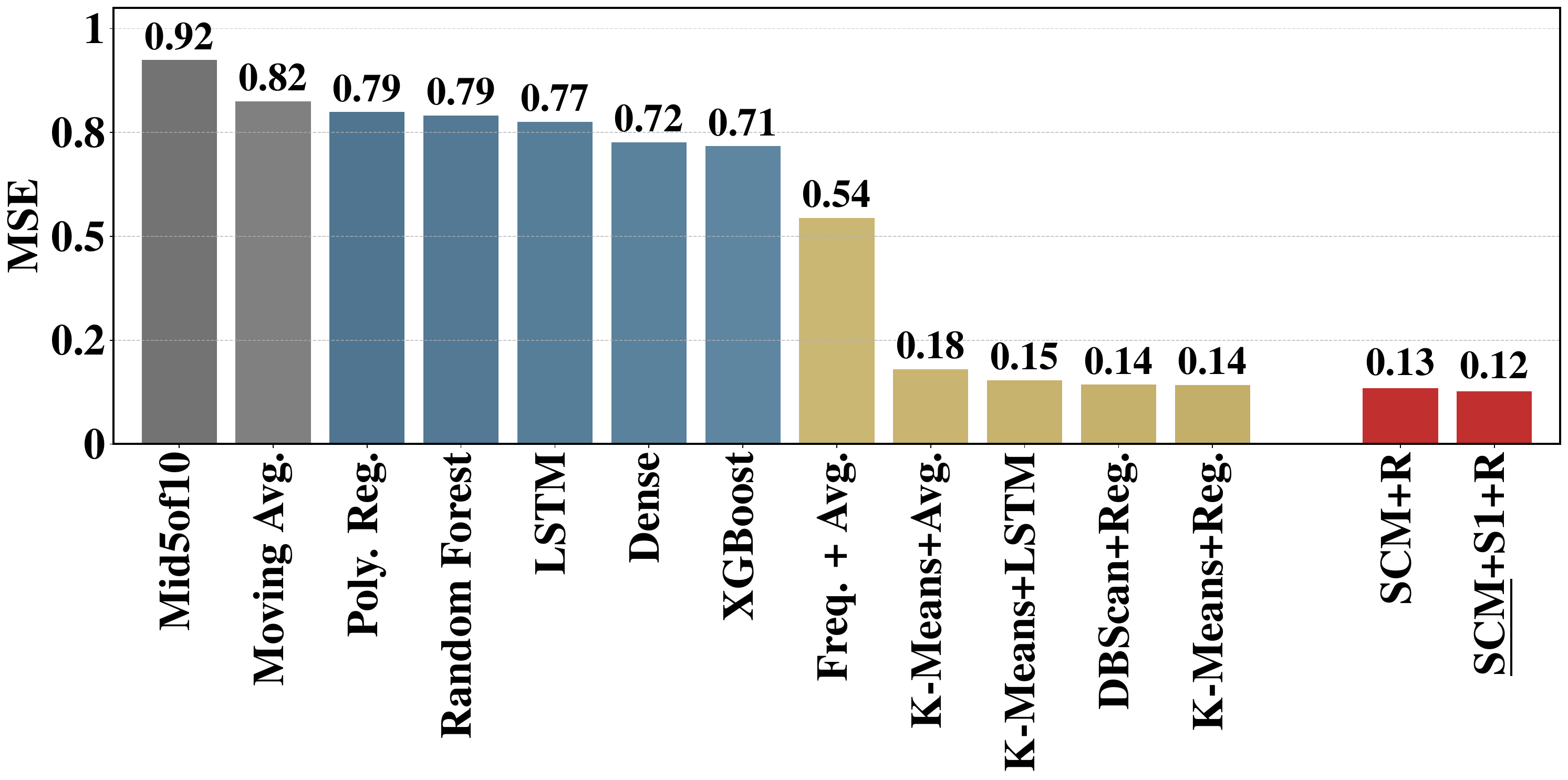}}}
    \caption{Baseline Estimation over 50 Buildings, comparing Averaging (grey), Regression (blue), Clustering (yellow), and Synthetic Control (red) Methods.}
    \label{fig2_bench}
\end{figure}

\begin{table}[h]
\centering
\caption{Accuracy of Linear Synthetic Control Variants and the Best Benchmark over 50 Residential Buildings.}
\label{tab2_lscm}
\footnotesize
\renewcommand{\arraystretch}{1.2}
\setlength{\tabcolsep}{14pt} % default is about 6pt, increase for a wider table
\begin{tabular}{|l|c|c|c|c|c|}
\hline
\textbf{Scenario} & \textbf{MSE} & \textbf{Min} & \textbf{Max} & \textbf{STD} & \textbf{Diff} \\ \hline
K-Means+Lasso   & 0.1426 & 0.0303 & 0.3610 & 0.0751 &               \\ 
SCM+R           & 0.1351 & 0.0273 & 0.4077 & 0.0869 & 5.26\% \\ 
$\underline{\text{SCM}}$+S1+R & 0.1274 & 0.0243 & 0.3393 & 0.0815 & 10.66\% \\ 
$\underline{\text{SCM}}$+R      & 0.1275 & 0.0246 & 0.3366 & 0.0818 & 10.60\% \\ \hline
\end{tabular}%

\vspace{4pt} % This adds the gap precisely between the table and the note
\begin{minipage}{\columnwidth}
\scriptsize
\textbf{Abbreviations:} K-Means + Lasso denotes K-Means clustering with Lasso regression; $\underline{\text{SCM}}$ unconstrained SCM; +S1 adds sum-to-one; +R regularization.
\end{minipage}
%\vspace{-10pt}
\end{table}

Among the averaging-based baselines, Mid5of10 \cite{pati_methodologies_2020} and Moving Average \cite{pati_methodologies_2020} yield \gls{mse} values of 0.9243 and 0.8250, respectively. Regression-based methods show modest improvements: Polynomial Regression \cite{chen_short-term_2017} achieves an \gls{mse} of 0.7992, followed by Random Forest \cite{campodonico_avendano_assessing_2023} at 0.7914, \gls{lstm} \cite{zhang_cbl_2024} at 0.7756, \gls{dnn} \cite{pati_methodologies_2020} at 0.7268, and XGBoost \cite{campodonico_avendano_assessing_2023} at 0.7170. These reductions are consistent across models, but remain within statistical variation, with standard deviations ranging from 1.6095 (XGBoost) to 1.8329 (Random Forest).

In our experiments, clustering-based methods that exploit information from similar buildings consistently outperform baselines based only on the treated building. K-means and DBSCAN perform comparably, while frequency-based clustering yields slightly higher errors. Among these variants, K-means with simple averaging attains an \gls{mse} of 0.1800, K-means with an \gls{lstm} reduces this to 0.1534, and K-means with Lasso regression achieves the lowest value, 0.1426.

\autoref{tab2_lscm} compares the best benchmark, K-means with Lasso regression, against three regularized linear \gls{scm} variants. SCM+R, which enforces both sum-to-one and non-negativity together with regularization, achieves an \gls{mse} of 0.1351, improving on the benchmark by \SI{5.26}{\%}. The $\underline{\text{SCM}}$+S1+R variant, which retains only the sum-to-one constraint, achieves the lowest \gls{mse}, 0.1274, corresponding to a \SI{10.66}{\%} improvement and reducing the minimum error from 0.0303 to 0.0243 and the maximum from 0.3610 to 0.3393. Finally, the unconstrained $\underline{\text{SCM}}$+R variant attains an \gls{mse} of 0.1275, corresponding to a \SI{10.60}{\%} improvement.

These results show that regularized linear \gls{scm} outperforms the strongest benchmark while retaining interpretable weight structures suitable for demand response settlement.

\subsection{Effect of the Augmented Control Group} \label{sec_results_generalized_control_group}

\begin{figure}[!t]
    \centering
    \framebox{\parbox{5 in}{\includegraphics[width=5 in]{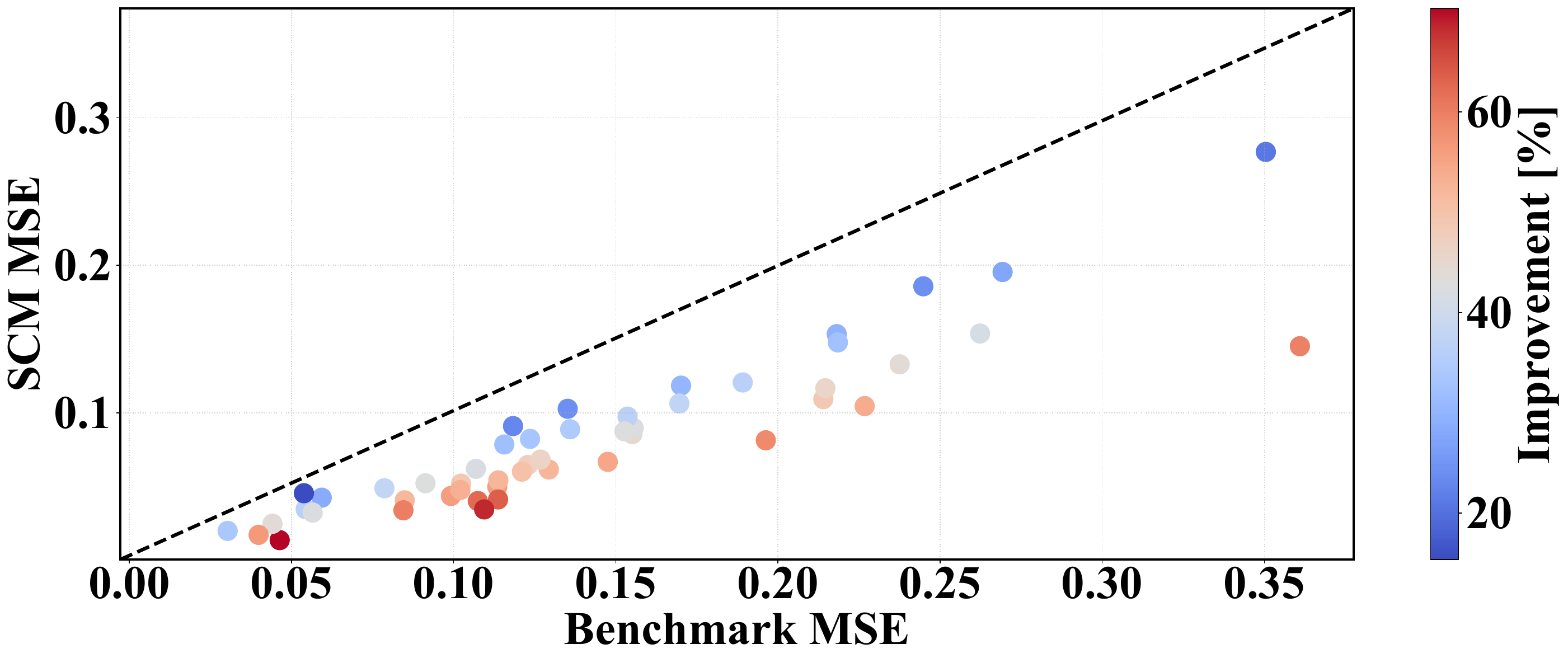}}}
    \caption{MSE Comparison over 50 Buildings between $\underline{\text{SCM}}$+S1+R+Dpast and the Best Benchmark. Sub-Diagonal Points show SCM improvements.}
    \label{fig3_build}
\end{figure}

\begin{table}[h]
\centering
\caption{Accuracy of the Generalized Control Group and the Best Benchmark over 50 Residential Buildings.}
\label{tab3_gCG}
\renewcommand{\arraystretch}{1.2}
\setlength{\tabcolsep}{12.5pt}

\begin{tabular}{|l|c|c|c|c|c|}
\hline
\textbf{Scenario} & \textbf{MSE} & \textbf{Min} & \textbf{Max} & \textbf{STD} & \textbf{Diff} \\ \hline
K-Means + Lasso   & 0.1426 & 0.0303 & 0.3610 & 0.0751 &               \\ 
$\underline{\text{SCM}}$+S1+R           & 0.1274 & 0.0243 & 0.3393 & 0.0815 & 10.66\% \\ 
$\underline{\text{SCM}}$+S1+R+exF     & 0.1271 & 0.0245 & 0.3412 & 0.0817 & 10.87\% \\ 
$\underline{\text{SCM}}$+S1+R+Tpast   & 0.0920 & 0.0155 & 0.2912 & 0.0572 & 35.48\% \\ 
$\underline{\text{SCM}}$+S1+R+Dpast   & 0.0821 & 0.0138 & 0.2768 & 0.0519 & 42.43\% \\ \hline
\end{tabular}%

\vspace{4pt}
\begin{minipage}{\columnwidth}
\scriptsize
\textbf{Abbreviations:} K-Means + Lasso denotes K-Means clustering with Lasso regression; $\underline{\text{SCM}}$ unconstrained SCM; +S1 adds sum-to-one; +R regularization; +exF external features; +Tpast adds treated history; +Dpast adds top-correlated donor history.
\end{minipage}
%\vspace{-10pt}
\end{table}

We posit that accurate baseline estimation benefits from explicitly modeling both external drivers and temporal dependencies, which are not fully captured by the standard \gls{scm} formulation. To demonstrate this, we build on the best-performing linear variant ($\underline{\text{SCM}}$+S1+R) by expanding the control group in three successive stages: incorporating exogenous features (+exF), adding recent values of the treated unit (+Tpast), and including donor-specific lagged values (+Dpast). Each stage builds on the previous one, resulting in progressively richer representations of the control group. \autoref{tab3_gCG} reports the performance of each configuration relative to the strongest benchmark, K-Means + Lasso.

Here, the inclusion of external features (+exF) decreases the mean \gls{mse} from 0.1274 for $\underline{\text{SCM}}$+S1+R to 0.1271, corresponding to a marginal gain over $\underline{\text{SCM}}$+S1+R and a total improvement of \SI{10.87}{\%} relative to the benchmark. Augmenting the model with recent values of the treated unit (+Tpast) further reduces the mean \gls{mse} to 0.0920, yielding an additional improvement of \SI{27.79}{\%} over $\underline{\text{SCM}}$+S1+R and \SI{35.48}{\%} compared to the benchmark. The final extension, incorporating donor-specific lags (+Dpast), attains the lowest mean \gls{mse} of 0.0821, corresponding to improvements of \SI{35.56}{\%} over $\underline{\text{SCM}}$+S1+R and \SI{42.43}{\%} relative to the benchmark. This configuration also reduces the minimum error from 0.0303 to 0.0138, the maximum error from 0.3610 to 0.2768, and the standard deviation from $\pm$0.0751 to $\pm$0.0519, indicating both increased accuracy and reduced variability across buildings.

\autoref{fig3_build} shows building-level \gls{mse} values for the $\underline{\text{SCM}}$+S1+R+Dpast variant compared to the strongest benchmark. Points below the parity line indicate buildings where the \gls{scm} achieves lower errors, with color shading denoting relative improvement. The distribution reveals a consistent advantage for \gls{scm}, with lower \gls{mse} across all 50 buildings. On average, the mean \gls{mse} decreases by \SI{42.43}{\%}, with individual gains ranging from \SI{15.37}{\%} to \SI{70.31}{\%}. Improvements occur across the entire error spectrum, indicating stable performance independent of initial error magnitude.

\begin{figure}
    \centering
    \framebox{\parbox{5 in}{\includegraphics[width=5 in]{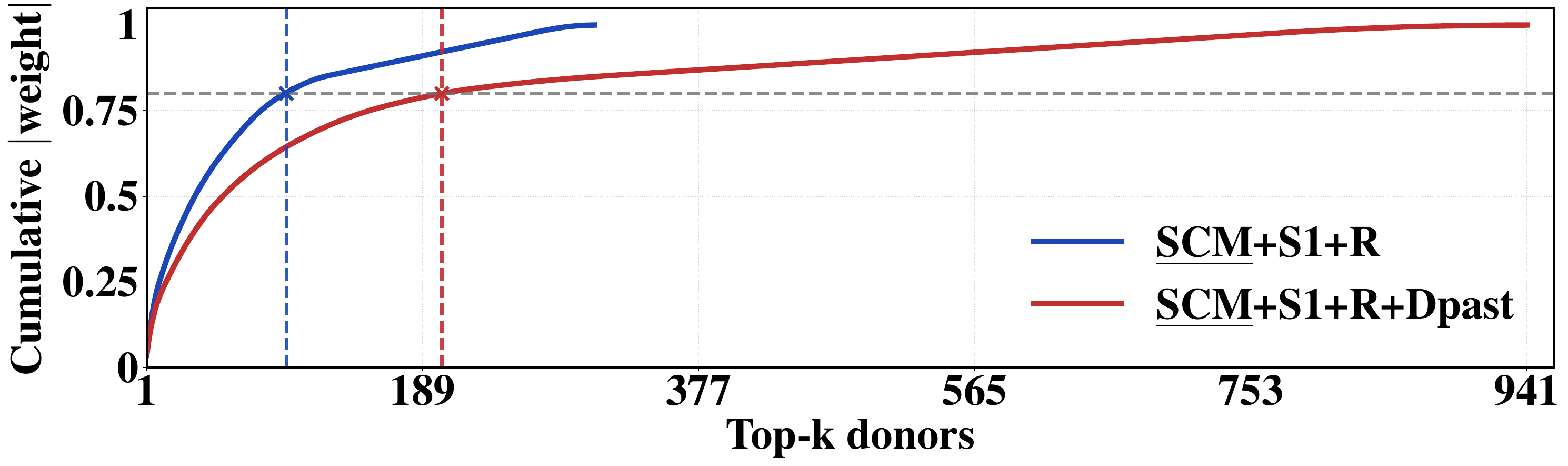}}}
    \caption{Cumulative absolute weight share for $\underline{\text{SCM}}$+S1+R and $\underline{\text{SCM}}$+S1+R+Dpast. Dashed lines indicate the number of top donors required to account for 80\% of the total weight.}
    \label{fig4_w}
\end{figure}

We further analyze the distribution of control group weights to determine whether the observed gains stem from a few dominant donors or from broader participation. \autoref{fig4_w} compares cumulative weight distributions for $\underline{\text{SCM}}$+S1+R and $\underline{\text{SCM}}$+S1+R+Dpast, ranking donors by absolute weight. In $\underline{\text{SCM}}$+S1+R, \SI{80}{\%} of the total absolute weight is concentrated among the top 96 donors, with the largest individual weight of 0.1329. In contrast, $\underline{\text{SCM}}$+S1+R+Dpast requires 202 donors to reach the same \SI{80}{\%} threshold, indicating a more even distribution, though the maximum weight increases slightly to 0.1563. 
The improved accuracy is likely the result of effectively using the additional features, which spreads the weights across more donors.

\subsection{Effect of the Generalized Nonlinear Weighting} \label{sec_results_nonlinear_weights}

To increase \gls{scm} flexibility, the linear weighting is replaced with a nonlinear mapping (DNN) that captures complex dependencies between the treated and donor units. The \gls{dnn}-based approach is evaluated for both the standard and augmented control groups.

\begin{table}[t!]
\centering
\caption{Baseline estimation performance of linear and nonlinear SCM with basic and augmented control groups.}
\label{tab4_gW}
\renewcommand{\arraystretch}{1.2}
\setlength{\tabcolsep}{12.5pt}

\begin{tabular}{|l|c|c|c|c|c|}
\hline
\textbf{Scenario} & \textbf{MSE} & \textbf{Min} & \textbf{Max} & \textbf{STD} & \textbf{Diff} \\ \hline
K-Means+Lasso   & 0.1426 & 0.0303 & 0.3610 & 0.0751 &               \\ 
$\underline{\text{SCM}}$+S1+R           & 0.1274 & 0.0243 & 0.3393 & 0.0815 & 10.66\% \\ 
NN.SCM                                     & 0.1260 & 0.0256 & 0.3636 & 0.0852 & 11.64\% \\ 
$\underline{\text{SCM}}$+S1+R+Dpast   & 0.0821 & 0.0138 & 0.2768 & 0.0519 & 42.43\% \\ 
NN.SCM+Dpast                             & 0.0817 & 0.0127 & 0.2796 & 0.0543 & 42.70\% \\ \hline
\end{tabular}%

\vspace{4pt}
\begin{minipage}{\columnwidth}
\scriptsize
\textbf{Abbreviations:} K-Means + Lasso denotes K-Means clustering with Lasso regression; $\underline{\text{SCM}}$ unconstrained SCM; +S1 adds sum-to-one; +R regularization; NN.SCM neural network extension; +Dpast adds history and top-correlated donor value.
\end{minipage}
%\vspace{-10pt}
\end{table}

\autoref{tab4_gW} shows that both nonlinear and linear variants achieve nearly identical accuracy. For the standard control group, the mean \gls{mse} decreases from 0.1274 (linear) to 0.1260 (nonlinear), a relative improvement of \SI{1.10}{\%}. For the generalized control group, it decreases from 0.0821 to 0.0817, an improvement of \SI{0.49}{\%}. Both variants outperform the benchmark substantially, improving accuracy by \SI{42.43}{\%} (linear) and \SI{42.71}{\%} (nonlinear).

To analyze robustness, we further evaluate both weighting schemes across progressively smaller control groups, as summarized in \autoref{fig5_dec}.

\begin{figure} 
  \centering
  \framebox{\parbox{5 in}{\includegraphics[width=5 in]{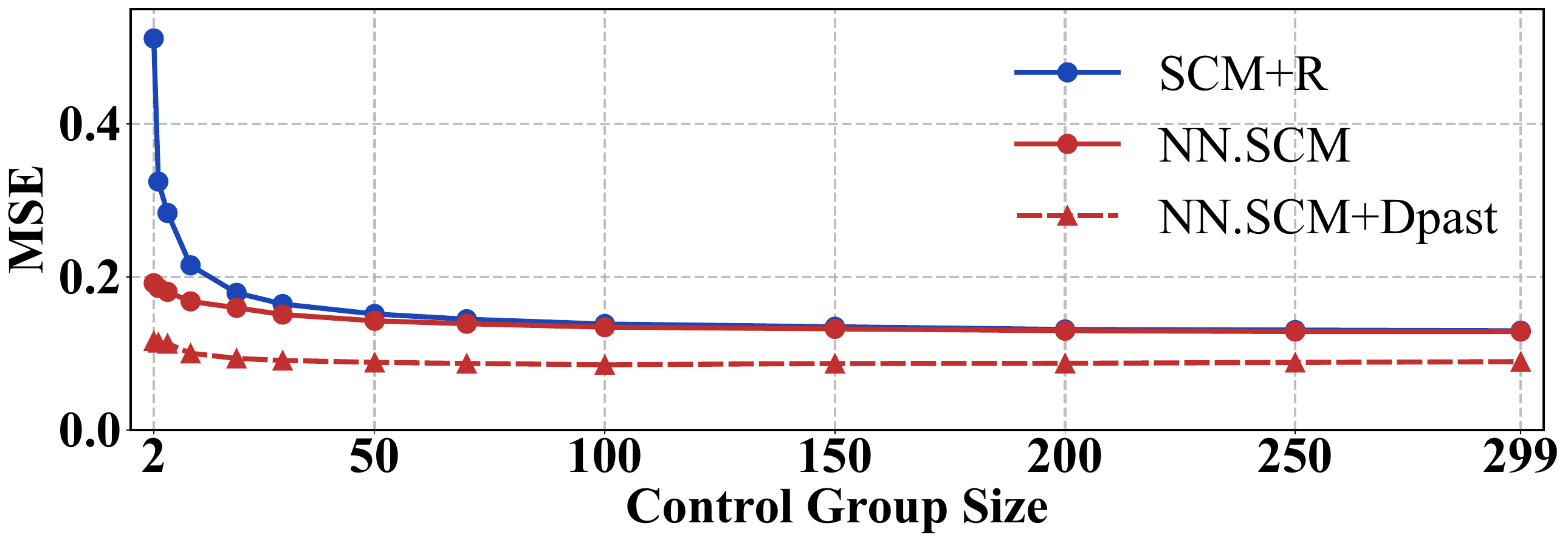}}}
  \caption{Comparison of MSE Between Linear and Nonlinear SCM Variants for Different Control Group Sizes.}
  \label{fig5_dec}
\end{figure}

For both linear and nonlinear \gls{scm} variants, expanding the donor pool consistently lowers the \gls{mse}. 
As the donor pool shrinks, performance gaps widen. 
%In the extreme case of two donors, the error increases sharply to 0.5116 for SCM+R but remains considerably lower at 0.1918 for NN.SCM and 0.1163 for NN.SCM+Dpast. 
These results suggest that while both methods perform comparably with large donor pools, the nonlinear \gls{scm} maintains lower errors and reduced variability when the control group is small.

\section{DISCUSSION AND LIMITATIONS} \label{sec_discussion}
%The proposed \gls{scm} consistently outperformed all benchmark methods across all the evaluated scenarios.

In \autoref{sec_results_linear}, the standard linear \gls{scm} outperformed the benchmarks. Unlike clustering-based methods that preselect donors by similarity, the \gls{scm} optimizes weights over the entire donor pool, allowing the model to combine donors that may be individually dissimilar to the treated unit but jointly approximate its behavior well. Relaxing the non-negativity constraint further allows it to exploit both positive and negative demand correlations, increasing flexibility in reconstructing counterfactual load profiles.

\autoref{sec_results_generalized_control_group} extends the control group with external features, recent treated unit history, and lagged donor values. Weather and calendar features yielded only limited gains. This could be partially attributed to the low temporal resolution of the weather data. Lagged treated values provide an autoregressive signal that compensates for imperfect donor matches. Lagged donor values improving matching by aligning similar but temporally misaligned patterns. Further discussion is supplied in the Appendix.

In \autoref{sec_results_nonlinear_weights}, the linear \gls{scm} matches the generalized (nonlinear) SCM when the donor pool is large, suggesting that linearity is sufficient in data-rich settings. When the donor pool is small or less representative, however, the nonlinear variant achieves lower errors and reduced variance. 
%The nonlinear weighting thus increases robustness in data-constrained scenarios.

However, some limitations remain and suggest directions for further improvement. The evaluation uses a single dataset that may not capture the full range of regulatory settings or flexibility mechanisms. Benchmarking also remains challenging because existing methods rely on different assumptions, such as prior knowledge of activation periods, and therefore require careful adaptation for fair comparison.

\section{Conclusions and Future Work}\label{sec_conclusion}
We proposed a Generalized Synthetic Control framework for baseline estimation in residential demand response, extending the classical \gls{scm} with an augmented control representation and a nonlinear weighting function. On the Ausgrid dataset, the linear \gls{scm} outperformed the strongest benchmarks by more than \SI{10}{\%}, while the generalized variant improved accuracy by up to \SI{42}{\%}. Linear weighting remained highly effective with large donor pools, whereas the nonlinear model increased robustness and accuracy for smaller donor pools. Future work includes uncertainty quantification for counterfactual trajectories, multi-step recursive prediction, broader validation beyond Ausgrid, and more principled donor and lag selection. Additional directions include stronger nonlinear models and robustness to distribution shift and strategic behavior.

\appendices

% This line forces the word 'Appendix' and a space before the actual letter/number
\makeatletter
\def\p@section{Appendix~}
\def\p@subsection{Appendix~}
\makeatother

\section{Experimental Setup} \label{ap:scm_exp_setup}
We use the Ausgrid dataset \cite{ratnam_residential_2017}, comprising half-hourly measurements of electrical load and \gls{pv} generation (in kW) from 300 Australian households, recorded from 2010 to 2013. To enhance the dataset, we simulate \gls{bess} operation using a linear programming approach (Appendix~\ref{ap:LP_bess}). The optimization determines economically optimal charging and discharging schedules under dynamic electricity prices, resulting in comprehensive prosumption time series that integrate household load, \gls{pv} generation, and \gls{bess} behavior. To capture exogenous influences, we add meteorological variables from Meteostat \cite{meteostat_sydney_nodate}, aligned with the Ausgrid data in space and time. The resulting dataset is split into training (\SI{60}{\%}), validation (\SI{10}{\%}), and testing (\SI{30}{\%}) partitions and is publicly available at \cite{sievers_github_2025}.

All methods are compared across 50 buildings, each evaluated as the treated unit in turn against the remaining $J=299$ donors. The augmented control group further adds $L=336$ recent treated-unit values for $\mathbf{H}$, one selected lagged sequence per donor for $\mathbf{R}$, and $M=7$ exogenous features for $\mathbf{F}$: four meteorological variables (temperature, humidity, wind speed, wind direction) and three temporal indicators (weekday, and sine and cosine of hour-of-day). 

For the linear \gls{scm}, the best results are obtained with $\ell_2$ regularization, using $\lambda=450$ for the augmented and $\lambda=1350$ for the basic control group to promote uniform weights.
For the nonlinear \gls{scm}, we test multiple \gls{mlp} configurations on the validation set, varying model depth, width, learning rate, and batch size. The best-performing model uses two hidden layers of 64 units, followed by a layer of $J$ units and a scalar output. Training employs a learning rate of $5\times10^{-4}$, early stopping with a patience of 10, and a maximum of 2000 epochs. \gls{mse} values are averaged over all treated buildings, and neural networks are trained three times per building to reduce variability from random initialization.

Benchmark models are adapted from their original publications, with architectures and hyperparameters tuned to the dataset and baseline estimation task. Additional implementation details can be found in the cited works.

\section{Linear Program for Energy Management System} \label{ap:LP_bess}
This section outlines the linear program used to optimize \gls{bess} operations under dynamic electricity prices from the Australian Energy Market Operator \cite{aemo_aggregated_nodate}. The \gls{ems} minimizes total electricity costs, accounting for both consumption and flexibility revenues.

\emph{Objective Function:}
\begin{equation}
    \min \sum_{t \in T} \left( p_t \cdot x_t - f_t \cdot \delta_t^{+} \right)
\end{equation}
with dynamic electricity price $p_t$, net grid consumption $x_t$, flexibility rebate $f_t$, and rewarded reduction $\delta_t^+$.

\emph{Constraints:} \\
Net grid consumption $x_t$ includes building load $l_t$, \gls{pv} generation $g_t$, and \gls{bess} operation $s_t$.
\begin{equation}
    x_t = l_t - g_t + s_{t}, \quad \forall t \in T
\end{equation}

\pagebreak

The baseline $b_t$ is the average consumption during the same hour of the $n$ most recent congestion-free days:

\begin{equation}
    b_t = \dfrac{1}{n} \sum_{k \in S_t} x_{t - 24k}, \quad \forall t \in T,
    \quad \forall k \in \{1,\dots,n\}
\end{equation}
\begin{equation}
    S_t = \left\{ k \in \{1, 2, \dots, n\} \ \bigg| \ \sum_{h=-12}^{12} C_{t - 24k + h} = 0 \right\}
\end{equation}
         
Flexibility $\delta_t^+$ is the positive deviation from the baseline.
\begin{equation}
    \delta_t^+ = max(0, b_t - x_t), \quad \forall t \in T
\end{equation}

The \gls{soc} is updated with dis-charging actions $s_t$, considering efficiency $\eta_{s}$.

\begin{equation}
   \text{SOC}_t = \text{SOC}_{t-1} + \eta_{\text{s}} \cdot s_{t}, \forall t \in T \setminus \{0\}.
\end{equation}
    
Domain constraints ensure feasible operation:
\begin{equation}
s_{t} \in [-s^{max}, s^{max}], \quad \text{soc}_t \in [soc^{min}, soc^{max}]
\end{equation}

\section{Theoretical Justification of Augmented SCM} \label{ap:scm_theory}

This appendix gives a stylized justification for including lagged treated outcomes in the augmented Synthetic Control design. For brevity, we isolate one mechanism: under a linear factor model with exact contemporaneous factor replication by the donor pool, a lagged treated outcome can reduce one-step counterfactual mean-squared error by correcting persistent idiosyncratic mismatch left by static SCM. Analysis can be extended to the full  ridge estimator used in the paper at the expense of extra notation and modeling assumptions. 

\subsection*{A.1 Setup and assumptions}

We adopt the \emph{linear factor model} common in SC literature \cite{abadie_synthetic_2010}. For \(t \in T_0 \cup T_1\), let
\begin{equation}
y_{jt} = \lambda_j^\top f_t + \varepsilon_{jt},
\end{equation}
where \(f_t \in \mathbb{R}^r\) is a common-factor process, \(\lambda_j \in \mathbb{R}^r\) is the loading vector of unit \(j\), and \(\varepsilon_{jt}\) is an idiosyncratic component. Let
\begin{equation}
SC_t := \sum_{j=2}^{J+1} w_j^* y_{jt}
\end{equation}
denote the standard synthetic control constructed from donor weights \(w^*=(w_2^*,\dots,w_{J+1}^*)^\top\), estimated on the pre-intervention period. We impose the following assumptions.

\noindent\paragraph*{\bf{Assumptions}}
\begin{itemize}
\item[\bf{A1}][Exact factor replication]
There exist donor weights \(w_j^*\) satisfying \(\sum_{j=2}^{J+1} w_j^* = 1\) such that
\begin{equation}
\lambda_1 = \sum_{j=2}^{J+1} w_j^* \lambda_j .
\end{equation}

\item[\bf{A2}][AR(1) treated idiosyncratic component]
The treated idiosyncratic component satisfies
\begin{equation}
\varepsilon_{1t} = \rho \varepsilon_{1,t-1} + \nu_t, \qquad |\rho|<1,
\end{equation}
where \(\{\nu_t\}\) is white noise with variance \(\sigma_\nu^2\). Hence the stationary variance of \(\varepsilon_{1t}\) is

\begin{equation}
\sigma_\varepsilon^2 := \operatorname{Var}(\varepsilon_{1t}) = \frac{\sigma_\nu^2}{1-\rho^2}.
\end{equation}

\item[\bf{A3}][Donor idiosyncratic noise]
For \(j=2,\dots,J+1\), the donor idiosyncratic components \(\varepsilon_{jt}\) are mutually independent, independent of \(\{\varepsilon_{1t}\}_{t}\), serially uncorrelated, and satisfy $\operatorname{Var}(\varepsilon_{jt}) = \sigma_j^2.$ The aggregated donor noise and its variance are given by:
\begin{equation}
\eta_t := \sum_{j=2}^{J+1} w_j^* \varepsilon_{jt},
\enspace  
\omega^2 := \operatorname{Var}(\eta_t) = \sum_{j=2}^{J+1} (w_j^*)^2 \sigma_j^2.
\end{equation}
\item[\bf{A4}][Stochastic factors]
\label{ass:factors}
The factor process \(\{f_t\}\) is stationary with mean zero and covariance
$
\Sigma_f := \operatorname{Var}(f_t),
$
and is independent of all idiosyncratic components. Define
\begin{equation}
\sigma_\mu^2 := \operatorname{Var}(\lambda_1^\top f_t) = \lambda_1^\top \Sigma_f \lambda_1.
\end{equation}
\end{itemize}

\subsection*{A.2 A stylized lag-augmented one-step estimator}

To isolate the role of a lagged treated outcome, consider the one-step estimator
\begin{equation}
\widehat y^{\,LAG}_{1,T_0+1}(\beta)
:=
SC_{T_0+1} + \beta\, y_{1,T_0},
\end{equation}
where \(\beta \in \mathbb{R}\) is a scalar coefficient. This is a simplified surrogate for the paper's augmented estimator. 
%: it corresponds to adding the first lag of the treated outcome while keeping the contemporaneous donor weights fixed at \(w^*\).

\begin{proposition}[MSE reduction from lag augmentation]
\label{prop:main}
Under Assumptions {A1-A4}, define
\begin{equation}
A := \frac{\sigma_\varepsilon^2}{\sigma_\varepsilon^2 + \sigma_\mu^2} \in (0,1].
\end{equation}
Then:
\begin{itemize}
    \item[(i)] The population-optimal coefficient is 
    \begin{equation}
    \beta^* = \rho A.
    \end{equation}
    \item[(ii)] The resulting optimal one-step MSE is
    \begin{equation}
    \operatorname{MSE}(\beta^*)
    =
    (\sigma_\varepsilon^2 + \omega^2)
    - \rho^2 \sigma_\varepsilon^2 A.
    \end{equation}
    \item[(iii)] Hence the MSE reduction relative to standard SCM is
    \begin{equation}
    \Delta
    :=
    \operatorname{MSE}(SC) - \operatorname{MSE}(\beta^*)
    =
    \rho^2 \sigma_\varepsilon^2 A.
    \end{equation}
    In particular, the lag-augmented estimator strictly improves on standard SCM if and only if \(\rho \neq 0\).
\end{itemize}
\end{proposition}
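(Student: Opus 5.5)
The plan is to write the one-step forecast error of $\widehat y^{\,LAG}_{1,T_0+1}(\beta)$ explicitly as a linear combination of mutually uncorrelated pieces. Then $\operatorname{MSE}(\beta)$ becomes a scalar quadratic in $\beta$, which can be minimized in closed form. Throughout I treat $w^*$ as the fixed population weights of Assumption A1, not as estimated quantities. I also take the idiosyncratic components to be mean zero, which is implicit in A2--A3 (white noise and stationary AR(1)). The target is the untreated outcome $y_{1,T_0+1}$.

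First, by A1, $SC_t=\sum_j w_j^*(\lambda_j^\top f_t+\varepsilon_{jt})=\lambda_1^\top f_t+\eta_t$. The factor part of the treated unit is therefore reproduced exactly, and the error is
\begin{equation*}
e(\beta)=y_{1,T_0+1}-\widehat y^{\,LAG}_{1,T_0+1}(\beta)=\varepsilon_{1,T_0+1}-\eta_{T_0+1}-\beta\big(\lambda_1^\top f_{T_0}+\varepsilon_{1,T_0}\big).
\end{equation*}
Next I compute the second moments term by term.
\begin{itemize}
\item By A3, $\eta_{T_0+1}$ is independent of $\{\varepsilon_{1t}\}$. Because donor noise is serially uncorrelated and A4 makes $f$ independent of all idiosyncratic terms, $\eta_{T_0+1}$ is uncorrelated with every other term; it contributes exactly $\omega^2$.
\item By A4, $\lambda_1^\top f_{T_0}$ is uncorrelated with $\varepsilon_{1,T_0+1}$ and with $\varepsilon_{1,T_0}$, and it has variance $\sigma_\mu^2$. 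No assumption on the serial dependence of $f$ is needed, since $f_{T_0+1}$ has already cancelled.
\item By A2 and stationarity, $\operatorname{Var}(\varepsilon_{1,T_0+1})=\operatorname{Var}(\varepsilon_{1,T_0})=\sigma_\varepsilon^2$ and $\operatorname{Cov}(\varepsilon_{1,T_0+1},\varepsilon_{1,T_0})=\rho\sigma_\varepsilon^2$, using that $\nu_{T_0+1}$ is uncorrelated with the past.
\end{itemize}
Putting these together gives
\begin{equation*}
\operatorname{MSE}(\beta)=\sigma_\varepsilon^2+\omega^2-2\beta\rho\sigma_\varepsilon^2+\beta^2(\sigma_\varepsilon^2+\sigma_\mu^2).
\end{equation*}

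Part (i) then follows from the first-order condition of this strictly convex quadratic, whose leading coefficient is positive because $\sigma_\varepsilon^2>0$. This gives $\beta^*=\rho\sigma_\varepsilon^2/(\sigma_\varepsilon^2+\sigma_\mu^2)=\rho A$. For part (ii), substituting back gives $\operatorname{MSE}(\beta^*)=\sigma_\varepsilon^2+\omega^2-\rho^2\sigma_\varepsilon^4/(\sigma_\varepsilon^2+\sigma_\mu^2)$, which equals the stated form after factoring out $A$. For part (iii), standard SCM is the case $\beta=0$, with $\operatorname{MSE}(SC)=\sigma_\varepsilon^2+\omega^2$, so $\Delta=\rho^2\sigma_\varepsilon^2A$.

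For the ``if and only if'' claim, note that $\sigma_\varepsilon^2>0$ and $A>0$, where I assume $\sigma_\nu^2>0$ as the definition $A\in(0,1]$ presupposes. Hence $\Delta>0$ exactly when $\rho\neq 0$. When $\rho=0$ we get $\beta^*=0$, and the estimator reduces to SCM.

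The only delicate step is the bookkeeping of cross-covariances. In particular, the factor term at $T_0$ enters only through the lag, which inflates the variance of the regressor by $\sigma_\mu^2$; this is the source of the shrinkage factor $A$. One must also check that it is uncorrelated with the AR error. I would state explicitly that A4's independence of $f$ from all idiosyncratic components ensures this.
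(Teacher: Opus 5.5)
Your proposal is correct and follows essentially the same route as the paper. Both split the one-step error into the mutually uncorrelated pieces $(\varepsilon_{1,T_0+1}-\beta\varepsilon_{1,T_0})$, $\eta_{T_0+1}$ and $\beta\lambda_1^\top f_{T_0}$, obtain the quadratic $\operatorname{MSE}(\beta)=(\sigma_\varepsilon^2+\omega^2)-2\beta\rho\sigma_\varepsilon^2+\beta^2(\sigma_\varepsilon^2+\sigma_\mu^2)$, and minimize it; your explicit mean-zero and $\sigma_\nu^2>0$ remarks (needed for MSE to equal variance and for $A>0$ in the ``iff'' claim) make precise what the paper leaves implicit.
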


\begin{proof}
Using Assumption~A1,
\[
SC_{T_0+1} = \lambda_1^\top f_{T_0+1} + \eta_{T_0+1},
\]
while
\[
y_{1,T_0+1} = \lambda_1^\top f_{T_0+1} + \varepsilon_{1,T_0+1}.
\]
Therefore,
\begin{align*}
e^{LAG}_{T_0+1}(\beta)
&=
y_{1,T_0+1} - \widehat y^{\,LAG}_{1,T_0+1}(\beta)\\[0.05in]
&=
\varepsilon_{1,T_0+1} - \eta_{T_0+1} - \beta y_{1,T_0}\\[0.05in]
&=
    (\varepsilon_{1,T_0+1} - \beta \varepsilon_{1,T_0})
    - \eta_{T_0+1}
    - \beta \lambda_1^\top f_{T_0}.
\end{align*}

\noindent By Assumptions~A2--A4, the three terms
\[
\varepsilon_{1,T_0+1} - \beta \varepsilon_{1,T_0},
\qquad
\eta_{T_0+1},
\qquad
\lambda_1^\top f_{T_0}
\]
are mutually uncorrelated, hence
\[
\operatorname{MSE}(\beta)
=
\mathbb{E}\!\left[(\varepsilon_{1,T_0+1}-\beta\varepsilon_{1,T_0})^2\right]
+
\omega^2
+
\beta^2 \sigma_\mu^2.
\]
Expanding the first term gives
\[
\mathbb{E}\!\left[(\varepsilon_{1,T_0+1}-\beta\varepsilon_{1,T_0})^2\right] \\
=
\sigma_\varepsilon^2 - 2\beta \operatorname{Cov}(\varepsilon_{1,T_0+1},\varepsilon_{1,T_0}) + \beta^2 \sigma_\varepsilon^2,
\]
with
\[
\operatorname{Cov}(\varepsilon_{1,T_0+1},\varepsilon_{1,T_0}) = \rho \sigma_\varepsilon^2,
\]
under Assumption A2. Therefore,
\[
\operatorname{MSE}(\beta)
=
(\sigma_\varepsilon^2+\omega^2) - 2\beta\rho\sigma_\varepsilon^2 + \beta^2(\sigma_\varepsilon^2+\sigma_\mu^2).
\]
This is a strictly convex quadratic function in \(\beta\), so minimizing gives
\[
\beta^* = \frac{\rho \sigma_\varepsilon^2}{\sigma_\varepsilon^2 + \sigma_\mu^2} = \rho A,
\]
which establishes part (i).
Substituting \(\beta^*\) into the quadratic yields
\[
\operatorname{MSE}(\beta^*)=
(\sigma_\varepsilon^2+\omega^2)
-
\frac{(\rho \sigma_\varepsilon^2)^2}{\sigma_\varepsilon^2+\sigma_\mu^2} \\
= 
(\sigma_\varepsilon^2+\omega^2)-\rho^2\sigma_\varepsilon^2 A,
\]
which gives part (ii). Part (iii) follows because under Assumption A1, the standard SCM error is
\[
e_t^{SC} := y_{1t} - SC_t = \varepsilon_{1t} - \eta_t,
\]
and thus,
\[
\operatorname{MSE}(SC) = \operatorname{Var}(e_t^{SC}) = \sigma_\varepsilon^2 + \omega^2.
\]
\end{proof}

\begin{remark}
The coefficient \(\beta^*=\rho A\) is attenuated relative to \(\rho\) because the lagged treated outcome \(y_{1,T_0}\) mixes the predictive idiosyncratic signal \(\varepsilon_{1,T_0}\) with the factor component \(\lambda_1^\top f_{T_0}\). Thus, the lagged treated value is a proxy for the target residual signal.
\end{remark}

\subsection*{A.3 Residual-Feedback benchmark}

For comparison, consider the following residual-correcting feedback mechanism:
\begin{equation}
\widehat y^{\,RC}_{1,T_0+1}
:=
SC_{T_0+1} + \rho e_{T_0}^{SC},
\end{equation}
where $e_{T_0}^{SC} = y_{1,T_0}-SC_{T_0}$. 
This mechanism uses the latent SCM residual \(e_{T_0}^{SC}\) directly.

\begin{proposition}[Residual-feedback benchmark]
\label{prop:residual}
Under Assumptions {{A1-A4}}, the residual-feedback benchmark satisfies
\begin{equation}
e^{RC}_{T_0+1}
=
(\varepsilon_{1,T_0+1} - \rho \varepsilon_{1,T_0})
-
(\eta_{T_0+1} - \rho \eta_{T_0}),
\end{equation}
and
\begin{equation}
\operatorname{MSE}(RC)
=
(1-\rho^2)\sigma_\varepsilon^2 + (1+\rho^2)\omega^2.
\end{equation}
Hence, its MSE reduction relative to standard SCM is
\begin{equation}
\Delta_{RC} = \rho^2(\sigma_\varepsilon^2 - \omega^2).
\end{equation}
The residual-feedback benchmark thus improves on standard SCM if and only if \(\rho \neq 0\) and \(\sigma_\varepsilon^2 > \omega^2\).
\end{proposition}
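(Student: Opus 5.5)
The plan is to mirror the proof of Proposition~\ref{prop:main}: substitute the factor-model representations into the residual-feedback error and reduce everything to variances of idiosyncratic terms. By Assumption~A1 the factor components cancel exactly in the static SCM residual at every date, so
\[
e^{SC}_t = y_{1t} - SC_t = \varepsilon_{1t} - \eta_t ,
\]
and in particular $e^{SC}_{T_0} = \varepsilon_{1,T_0} - \eta_{T_0}$. Then
\[
e^{RC}_{T_0+1} = y_{1,T_0+1} - SC_{T_0+1} - \rho e^{SC}_{T_0} = e^{SC}_{T_0+1} - \rho e^{SC}_{T_0}.
\]
Regrouping gives the stated decomposition into $(\varepsilon_{1,T_0+1}-\rho\varepsilon_{1,T_0})$ and $(\eta_{T_0+1}-\rho\eta_{T_0})$. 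Unlike the lag estimator, no factor term $\lambda_1^\top f_{T_0}$ survives, because the feedback uses the residual rather than the raw outcome.

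For the MSE, I will argue that the two groups are uncorrelated. The first group depends only on $\{\varepsilon_{1t}\}$ and the second only on donor idiosyncratic terms, and these are independent by Assumption~A3. The expected squared error therefore splits into two second moments, both mean zero.

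For the treated part, Assumption~A2 gives $\varepsilon_{1,T_0+1}-\rho\varepsilon_{1,T_0} = \nu_{T_0+1}$. Its variance is $\sigma_\nu^2 = (1-\rho^2)\sigma_\varepsilon^2$. Equivalently, expand the square using $\operatorname{Cov}(\varepsilon_{1,T_0+1},\varepsilon_{1,T_0})=\rho\sigma_\varepsilon^2$, as in the earlier proof.

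For the donor part, serial uncorrelatedness in A3 gives $\operatorname{Cov}(\eta_{T_0+1},\eta_{T_0})=0$. Hence its variance is $\omega^2+\rho^2\omega^2$.

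Summing the two parts yields $\operatorname{MSE}(RC)=(1-\rho^2)\sigma_\varepsilon^2+(1+\rho^2)\omega^2$. Subtracting this from $\operatorname{MSE}(SC)=\sigma_\varepsilon^2+\omega^2$, which was established at the end of the proof of Proposition~\ref{prop:main}, gives $\Delta_{RC}=\rho^2(\sigma_\varepsilon^2-\omega^2)$. The iff claim is then immediate: $\Delta_{RC}>0$ exactly when $\rho^2>0$ and $\sigma_\varepsilon^2-\omega^2>0$.

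There is no real obstacle. The only care needed is at the point where mean-zero and uncorrelatedness are used. The idiosyncratic terms should be taken mean zero, implicitly as in the previous proof. I should also invoke the white-noise property of $\nu_t$ together with stationarity, so that $\nu_{T_0+1}$ is uncorrelated with $\varepsilon_{1,T_0}$; this gives the variance $\sigma_\nu^2$ directly and avoids relying on the covariance formula alone.
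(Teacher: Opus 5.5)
Your proposal is correct and follows the paper's proof essentially step for step: you use A1 to write $e^{RC}_{T_0+1}=e^{SC}_{T_0+1}-\rho e^{SC}_{T_0}$, identify the treated part as $\nu_{T_0+1}$ with variance $(1-\rho^2)\sigma_\varepsilon^2$, use serial uncorrelatedness from A3 to get $(1+\rho^2)\omega^2$ for the donor part, and subtract from $\operatorname{MSE}(SC)=\sigma_\varepsilon^2+\omega^2$. You are slightly more explicit than the paper about why the cross term vanishes (independence in A3 plus mean-zero idiosyncratic terms), which the paper uses silently when it simply adds the two variances.
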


\begin{proof}
Under Assumption A1 the error of the standard SCM is \(e_{T_0}^{SC}=\varepsilon_{1,T_0}-\eta_{T_0}\). Therefore,
\[
e^{RC}_{T_0+1}
=
e^{SC}_{T_0+1} - \rho e^{SC}_{T_0} \\
=
(\varepsilon_{1,T_0+1}-\rho\varepsilon_{1,T_0})
-
(\eta_{T_0+1}-\rho\eta_{T_0}).
\]
By Assumption~A2, we have
\[
\operatorname{Var}(\varepsilon_{1,T_0+1}-\rho\varepsilon_{1,T_0})
=
\operatorname{Var}(\nu_{T_0+1})
=
(1-\rho^2)\sigma_\varepsilon^2.
\]
Furthermore, since the donor noise is serially uncorrelated (Assumption A3),
we have 
\[
\operatorname{Var}(\eta_{T_0+1}-\rho\eta_{T_0})
=
(1+\rho^2)\omega^2.
\]
Adding the two terms gives the equation for MSE($RC$). Recall that under Assumption A1, the standard SCM error is
\[
e_t^{SC} := y_{1t} - SC_t = \varepsilon_{1t} - \eta_t,
\]
and thus,
\[
\operatorname{MSE}(SC) = \sigma_\varepsilon^2 + \omega^2.
\]
Subtracting MSE($SC$) from MSE($RC$) gives
\[
\Delta_{RC} = \rho^2(\sigma_\varepsilon^2-\omega^2).
\]
\end{proof}

\begin{remark}
The residual-feedback mechanism removes factor contamination by correcting with \(e_{T_0}^{SC}\) rather than \(y_{1,T_0}\), but it also feeds back lagged donor noise through \(-\rho \eta_{T_0}\). When donor noise is sufficiently small, this can outperform the lag-treated correction of Proposition~\ref{prop:main}. When donor noise is larger, however, that feedback can outweigh the benefit. In particular, for \(\rho\neq 0\), we have
\begin{equation}
\Delta_{RC} > \Delta
\iff
\omega^2 < \frac{\sigma_\varepsilon^2\sigma_\mu^2}{\sigma_\varepsilon^2+\sigma_\mu^2},
\end{equation}
with equality at the threshold.
\end{remark}

Proposition~\ref{prop:main} analyzes only the one-step surrogate obtained by adding a single lagged treated outcome while holding the contemporaneous donor weights fixed. Its purpose is to motivate the inclusion of persistent treated-unit mismatch not captured by static SCM.

The residual-feedback benchmark of Proposition \ref{prop:residual} clarifies a second point. Since the predictor 
\begin{equation}
SC_t + \rho e_{t-1}^{SC}
=
SC_t + \rho y_{1,t-1} - \rho \sum_{j=2}^{J+1} w_j^* y_{j,t-1},
\end{equation}
is linear in the lagged treated outcome and the first lag of every donor outcome, the benchmark is \emph{representable} inside an augmented linear feature class only if both the lagged treated outcome \(y_{1,t-1}\), and the full first-lag donor block \(\{y_{j,t-1}\}_{j=2}^{J+1}\) are included. Joint
linear or nonlinear re-optimization can then in principle, learn a
predictor that improves on both the lag-treated surrogate
of Proposition 1 and the residual-feedback mechanism of Proposition 2,
provided the additional flexibility is supported by the data.

\balance
%%%%%%%%%%%%%%%%%%%%%%%%%%%%%%%%%%%%%%%%%%%%%%%
\bibliographystyle{IEEEtran}
\bibliography{references}

\end{document}